\definecolor{cvprblue}{rgb}{0.21,0.49,0.74}
\pgfplotsset{compat=1.18}
\newcolumntype{A}{>{\columncolor{blue!10}}c}
\newtheorem{theorem}{Theorem}
\newtheorem{lemma}{Lemma}
\title{Single-Round Scalable Analytic Federated Learning}
\author{
Alan T. L. Bacellar\textsuperscript{1}, \space
Mustafa Munir\textsuperscript{1}, \space
Felipe M. G. França\textsuperscript{3},\\
Priscila M. V. Lima\textsuperscript{2}, \space
Radu Marculescu\textsuperscript{1}, \space
Lizy K. John\textsuperscript{1}\\[4pt]
\textsuperscript{1}University of Texas at Austin \\
\textsuperscript{2}Federal University of Rio de Janeiro\\
\textsuperscript{3} Instituto de Telecomunicações, Porto (now at Google)\\[4pt]
{\tt\small alanbacellar@utexas.edu}
}
\begin{document}
\maketitle

\begin{abstract}
Federated Learning (FL) is plagued by two key challenges: high communication overhead and performance collapse on heterogeneous (non-IID) data. Analytic FL (AFL) provides a single-round, data distribution invariant solution, but is limited to linear models. Subsequent non-linear approaches, like DeepAFL, regain accuracy but sacrifice the single-round benefit. In this work, we break this trade-off. We propose SAFLe, a framework that achieves scalable non-linear expressivity by introducing a structured head of bucketed features and sparse, grouped embeddings. We prove this non-linear architecture is mathematically equivalent to a high-dimensional linear regression. This key equivalence allows SAFLe to be solved with AFL's single-shot, invariant aggregation law. Empirically, SAFLe establishes a new state-of-the-art for analytic FL, significantly outperforming both linear AFL and multi-round DeepAFL in accuracy across all benchmarks, demonstrating a highly efficient and scalable solution for federated vision.
\end{abstract}

\section{Introduction}

Federated Learning (FL) enables multiple clients or devices to collaboratively train a shared model without exposing their private data. 
Instead of centralizing data, clients perform local updates and periodically communicate model parameters to a server, which aggregates them into a global model~\cite{McMahan2017FedAvg}. 
While conceptually appealing, conventional FL frameworks require many communication rounds—often hundreds or thousands—for a model to converge. 
In practical deployments, clients can operate at different speeds, disconnect intermittently, or fail mid-training, creating stragglers and asynchronous updates. 
Such instability causes training to progress unevenly, and the global model may take days or weeks to reach convergence, severely limiting FL's real-world scalability.

Beyond communication inefficiency, a deeper issue lies in \textbf{statistical heterogeneity} across clients. 
In real FL systems, local data distributions often differ sharply—for instance, users capture different visual styles, hospitals record different patient populations, or sensors observe non-overlapping environments. 
This non-IID nature of the data means that each client’s gradient direction diverges from the global optimum, degrading performance and convergence stability. 
Existing methods attempt to address this through various regularizers, dynamic aggregation schemes, personalization strategies, architectural adaptations, and the use of pre-trained models for initialization and distillation~\cite{Li2020FedProx,Wang2020FedNova,Li2021MOON,Acar2021FedDyn,yang2023fedfed,pieri2023handling,lee2023fedl2p,bao2024provable}, but these methods still struggle under strong non-IID settings.

To overcome these limitations, recent work proposed \textit{Analytic Federated Learning} (AFL)~\cite{He2025AFL}, which formulates the FL problem in closed form. 
AFL leverages a pre-trained backbone to extract embeddings on each client, and trains a linear regression head analytically in only one communication round. 
Its analytic aggregation law guarantees invariance to both data partitioning and client count, enabling the global solution to remain identical to centralized training regardless of heterogeneity. 
As a result, AFL achieves higher accuracy than conventional iterative FL methods under highly non-IID conditions, while requiring only a single communication round instead of hundreds. 
Despite these appealing properties, AFL remains constrained by its linear model structure, which limits representational capacity and the ability to capture nonlinear feature interactions.

More recently, DeepAFL~\cite{deepafl2025} proposed a layer-wise analytic training scheme that extends AFL into deeper architectures. 
DeepAFL retains AFL’s invariance property, but trades communication efficiency for greater accuracy. 
Each analytic layer requires a separate aggregation round, increasing synchronization overhead and deviating from AFL’s single-pass analytic design. 
Consequently, DeepAFL achieves higher accuracy than AFL on non-IID data but at the cost of multiple communication rounds. 

In parallel, recent one-shot federated learning methods have explored ensembling, synthetic data distillation, and pre-trained feature statistics to improve performance under severe heterogeneity~\cite{Allouah2024Revisiting,zhang2024fedsd2c,guan2025fedcgs}. 
However, these methods still generally lack the exact centralized-equivalence and heterogeneity-invariance guarantees provided by analytic formulations.

In this work, we propose \textbf{SAFLe} --- \textit{Sparse Analytic Federated Learning with nonlinear embeddings} --- a framework that retains AFL’s single-round analytic formulation while significantly enhancing model expressivity. SAFLe introduces a deterministic nonlinear transformation pipeline composed of three stages: \textit{feature bucketing}, \textit{shuffling and grouping}, and \textit{sparse embeddings}. We prove that this nonlinear transformation pipeline can be reformulated as an equivalent analytic regression problem, preserving AFL’s closed-form training and invariance properties.

This design allows SAFLe to scale model capacity by simply increasing the number of sparse embeddings, without altering the analytic formulation or introducing extra communication rounds. Empirically, SAFLe achieves higher accuracy than both AFL and DeepAFL across all datasets, including highly non-IID and large-client settings, while maintaining a single-round communication regime. 

Our main contributions are summarized as follows:
\begin{itemize}
    \item \textbf{A Novel Non-Linear Analytic Framework (SAFLe):} We propose SAFLe, a framework that uses a sparse, multi-embedding architecture to learn non-linear feature interactions, dramatically increasing model expressivity over previous analytic methods.

    \item \textbf{Proof of Linear Equivalence:} We prove that this complex non-linear model is mathematically equivalent to a high-dimensional linear regression. This is the key theoretical insight that makes it analytically solvable.

    \item \textbf{Single-Round and Invariant Aggregation:} By leveraging this equivalence, SAFLe is the first non-linear framework to inherit AFL's two key properties: it converges in a single communication round and its solution is mathematically invariant to statistical data heterogeneity.

    \item \textbf{State-of-the-Art Analytic Performance:} SAFLe establishes a new state-of-the-art for analytic federated learning, significantly outperforming prior single-round (AFL) and multi-round (DeepAFL) methods.
\end{itemize}

\section{Background and Preliminaries}

\subsection{Federated Learning}
Conventional iterative FL methods, pioneered by FedAvg \cite{McMahan2017FedAvg}, train a global model through many communication rounds by alternating local client updates with server-side aggregation. While effective, this paradigm is highly vulnerable to statistical heterogeneity across clients. Numerous methods have been proposed to mitigate this issue, including regularization-based approaches such as FedProx \cite{Li2020FedProx} and FedDyn \cite{Acar2021FedDyn}, normalized aggregation in FedNova \cite{Wang2020FedNova}, contrastive learning in MOON \cite{Li2021MOON}, feature distillation in FedFed \cite{yang2023fedfed}, and personalization with pre-trained models in FedL2P \cite{lee2023fedl2p}. Recent work has also highlighted the importance of architectural design in handling heterogeneity for visual FL \cite{pieri2023handling}. Despite these advances, these methods remain fundamentally gradient-based and iterative, incurring substantial communication overhead.

To reduce communication, One-Shot Federated Learning (OFL) aims to complete training in a single round \cite{Allouah2024Revisiting}. Existing OFL methods typically rely on server-side ensembling \cite{Allouah2024Revisiting}, knowledge distillation \cite{Dai2024CoBoosting,Wu2022FedKD}, or more recent synthetic distillation and global feature statistics, as in FedSD2C \cite{zhang2024fedsd2c} and FedCGS \cite{guan2025fedcgs}. However, OFL methods generally still suffer a noticeable accuracy gap relative to iterative FL, especially under strong heterogeneity \cite{Allouah2024Revisiting}.

\subsection{Analytic Federated Learning (AFL)}
Analytic Federated Learning (AFL) \cite{He2025AFL} was proposed to solve both problems. It is a gradient-free, single-round framework that uses a pre-trained frozen backbone. On each client $k$, a linear classification head is trained by solving a simple least-squares (LS) problem:
\begin{equation}
 \min_{W_k} \mathcal{L}(W_k) = ||Y_k - X_k W_k||_F^2 
\end{equation}
where $X_k$ is the matrix of backbone embeddings and $Y_k$ is the one-hot label matrix.

AFL's core contribution is the \textbf{Absolute Aggregation (AA) law}. This law provides an analytic formula for the server to perfectly reconstruct the centralized global model $W$ (the model that would have been trained on all data at once) from the local models ($W_u$, $W_v$, etc.) in a single round. For two clients $u$ and $v$, the aggregation is:
\begin{equation}
 W = \mathcal{W}_u W_u + \mathcal{W}_v W_v 
\end{equation}
The weighting matrices, $\mathcal{W}_u$ and $\mathcal{W}_v$, are not simple averages; they are computed analytically from the clients' data covariance matrices ($C_u = X_u^T X_u$, $C_v = X_v^T X_v$). Because this formula is an exact, closed-form solution, the final model $W$ is mathematically invariant to how the data is partitioned, making it robust to any degree of non-IID data or client count. To handle rank-deficient data, a \textbf{Regularization Intermediary (RI)} process is used to add and then analytically remove regularization, preserving the exact solution.

AFL is thus extremely fast and robust, but its primary limitation is its constrained representational power, as it can only train a single \textit{linear} layer.

\subsection{Deep Analytic Federated Learning (DeepAFL)}
DeepAFL \cite{deepafl2025} was introduced to address AFL's linearity constraint by building a deeper, non-linear analytic model inspired by ResNet, that achieves non-linearity by using layers with random weights.

However, DeepAFL sacrifices AFL's single-shot efficiency for this added expressivity. It uses a sequential, layer-wise training protocol that requires multiple communication rounds per layer. For each layer $t$ in the model, the server must: $1)$ Perform a full aggregation round (like AFL) to solve for the layer's classifier, $W_t$. $2)$ Perform a \textit{second}, separate aggregation round to collect components needed to solve for a new feature transformation, $\Omega_{t+1}$, which uses random projections to build the input for the next layer.

This multi-round process must be repeated for every layer added to the model. While DeepAFL achieves higher accuracy than AFL, it re-introduces significant synchronization overhead. This creates a clear trade-off: AFL is single-round but linear, while DeepAFL is non-linear but multi-round.

\subsection{Federated WiSARD Models}
A related direction \cite{eff_kno} explores federated aggregation with Weightless Neural Networks (WNNs) \cite{wnn_intro_esann,wisard}, using counter-based RAM memories to combine local knowledge across clients in a way that is invariant to how the data is partitioned. However, unlike analytic federated learning methods, these approaches do not derive the global model from a closed-form analytical solution; instead, they rely on heuristic-based learning and approximate memory mechanisms, which leads to lower accuracy. As a result, although they retain heterogeneity-invariant aggregation through counter-based RAM structures, they lack the precision and performance of analytically derived federated solutions.

\section{Methodology}

\begin{figure*}[t]
\centering
\includegraphics[width=\linewidth]{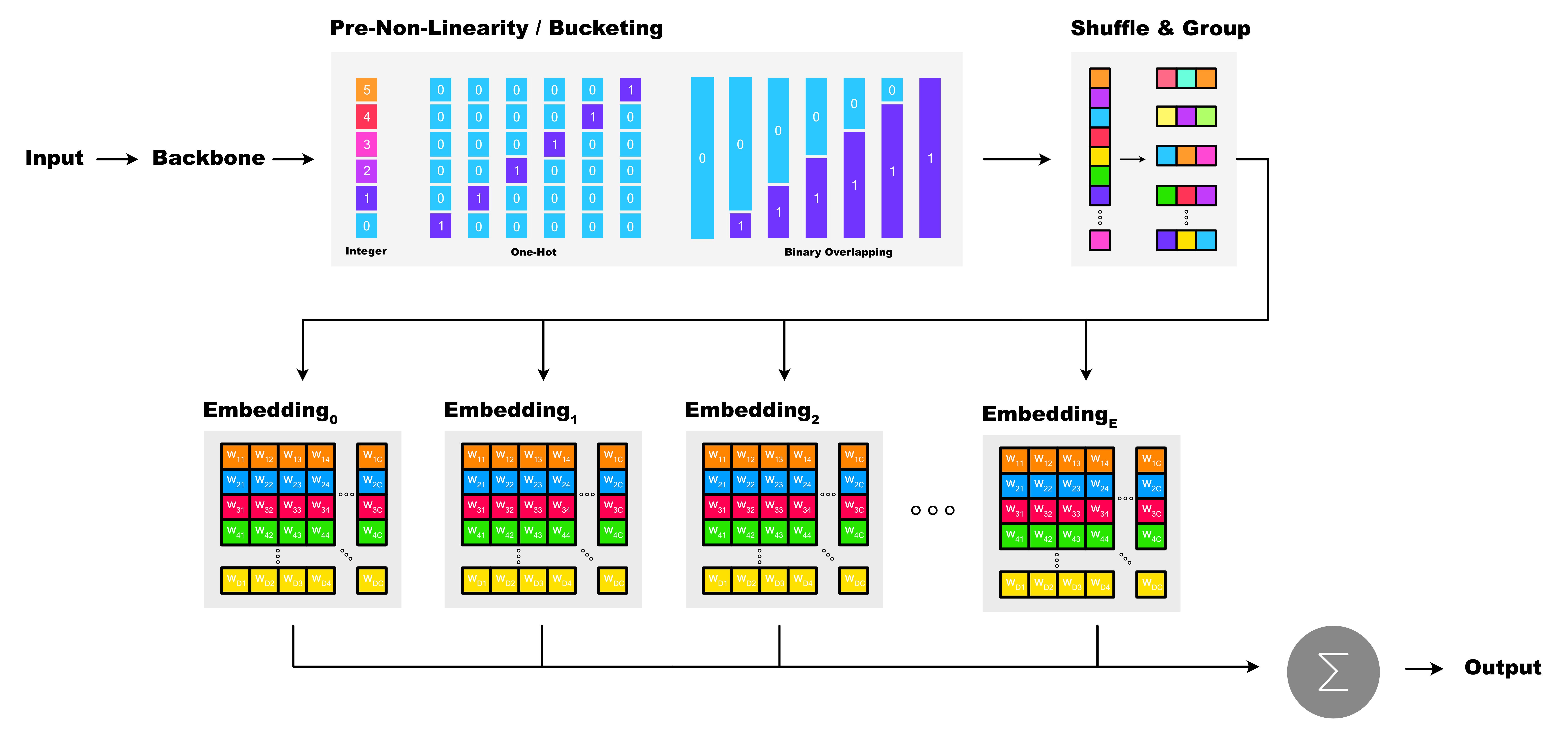}
\caption{Illustration of the proposed SAFLe model. Input images are first processed by a pre-trained backbone network to extract features. A Pre-Non-Linearity by Bucketing transformation is then applied to each feature using one of three methods: Integer, One-Hot, or Binary Overlapping bucketing. The resulting bucketed features are shuffled and divided into $E$ groups, each of which is passed through a corresponding learnable embedding. The outputs of all embeddings are summed to produce the final model output. The embedding parameters admit an analytical solution and can be optimized through a federated learning algorithm that remains invariant to both data distribution and the number of participating clients.}
\label{fig:fedbase}
\end{figure*}

\subsection{Intuition}

The primary limitation of AFL is its linear model, which restricts representational capacity. The key challenge is to introduce non-linearity while preserving the single-round, analytic solution. DeepAFL, the first attempt to solve this, derives its non-linearity from random projections. This stochastic approach is inefficient; it relies on the chance that stacking enough random matrices and activations will eventually approximate the complex, non-linear landscape of the data. Consequently, it requires many layers to achieve high accuracy, which in turn re-introduces the multi-round communication burden that AFL was designed to eliminate.

We propose a different, more structured approach. Instead of relying on random weights, our core idea is to deterministically partition the continuous feature space using bucketing. This creates a set of discrete "regions" in the data landscape. We can then use embedding layers to directly learn the optimal output (e.g., logits) for inputs that fall into specific combinations of these regions. This directly models the non-linear function. A naive implementation, however, would be an intractable lookup table that would severely overfit. We solve this by introducing a sparse, multi-embedding architecture. The feature-buckets are shuffled and partitioned into many small, independent groups ($E$), and each embedding layer learns from only one of these "subviews." The final prediction is the sum of all these "expert" embeddings. See Figure \ref{fig:fedbase}. This forces generalization, as the model learns to make predictions from diverse, partial feature-combinations. This method is fundamentally more structured than DeepAFL's, and as we will prove, it remains fully analytic and solvable in a single round.

This architecture is conceptually similar to a Mixture of Experts (MoE) model \cite{jacobs1991adaptive}, as it employs a set of "experts" (our $E$ embedding layers) to learn specialized functions over different parts of the input space. However, SAFLe differs in two crucial ways: its routing mechanism and its activation pattern. First, instead of a \textit{learned} gating network that dynamically routes an input, SAFLe uses our \textit{deterministic} bucketing-and-shuffling pipeline as a fixed pre-router. Second, rather than the sparse activation of a standard MoE, SAFLe uses dense activation: \textit{all} $E$ experts are activated for \textit{every} input.

This design is the key to both its expressivity and its analytic nature. Each expert is forced to learn a specialized, non-linear function based on only a small, fixed "subview" of the total input features. The model's full non-linear capacity comes from summing the contributions of all these parallel "subview experts." Crucially, this fixed, deterministic routing—unlike a standard MoE's learned gate—allows us to reformulate the entire non-linear architecture as an equivalent high-dimensional linear regression. This reformulation is precisely what makes our model compatible with AFL's Absolute Aggregation (AA) law, enabling a single-round, gradient-free solution.

\subsection{SAFLe}
\label{sec:arch}

Our objective is to break AFL's linearity barrier while retaining its single-round, gradient-free properties. We replace AFL's simple linear regressor with a deterministic, non-linear transformation pipeline, as illustrated in Figure~\ref{fig:fedbase}.

Let the output of the pre-trained backbone be a feature vector $x \in \mathbb{R}^{d_b}$, where $d_b$ is the dimension of the backbone's feature output. Our non-linear head, $f_{NL}(x)$, transforms this vector into the final class logits $\hat{y} \in \mathbb{R}^{C}$. This transformation consists of three stages:

\textbf{1. Pre-Non-Linearity Bucketing:}
First, we apply $L$ different bucketing functions $B_l(\cdot)$ to each feature $x_i$ in the backbone output. Each function $B_l: \mathbb{R} \to \{0, \dots, k-1\}$ quantizes the continuous feature into one of $k$ discrete bins. This transforms the original vector $x \in \mathbb{R}^{d_b}$ into a new quantized integer vector $b \in \mathbb{Z}^{d_q}$, where the new dimension $d_q = d_b \times L$.
\begin{equation}
 b = [B_{1}(x_1), \dots, B_{L}(x_1), \dots, B_{1}(x_{d_b}), \dots, B_{L}(x_{d_b})] 
\end{equation}
The choice of bucketing strategy is discussed in our ablations (Section~\ref{sec:experiments}).

\textbf{2. Shuffling and Grouping:}
To break feature locality and create diverse "subviews" for our experts, the integer vector $b$ is shuffled using a fixed, deterministic permutation $P$. This shuffled vector $b' = P(b)$ is then partitioned into $E$ groups (our "experts"). Each group $j$ contains $G$ integer indices, $g_j = [b'_{j,1}, \dots, b'_{j,G}]$, such that $E \times G = d_q$.

\textbf{3. Sparse Embedding and Summation:}
For each of the $E$ groups, we compute a single composite index $idx_j$ by treating the $G$ integers as digits in a base-$k$ number system:
\begin{equation}
 idx_j = \sum_{i=1}^{G} g_{j,i} \cdot k^{i-1} 
\end{equation}
This composite index has a maximum value of $V = k^G$, which defines the "vocabulary size" (i.e., number of rows) for each of our $E$ embedding matrices. Each embedding matrix $W_j \in \mathbb{R}^{V \times C}$ thus maps its specific group of $G$ bucketed features to a $C$-dimensional output vector.

The final output logit $\hat{y}$ is the dense activation and sum of the outputs from all $E$ embedding-experts:
\begin{equation}
 \hat{y} = f_{NL}(x) = \sum_{j=1}^{E} W_j[idx_j, :] 
\end{equation}
As we show in our ablations, using many small, independent embeddings (high $E$, small $G$ and $V$) is critical for generalization.

\subsection{Analytic Formulation and Aggregation}
We now prove that the proposed non-linear architecture $f_{NL}$ can be solved with a closed-form analytic solution. We first formalize the linear equivalence.

\begin{lemma}
\textbf{(SAFLe Linear Equivalence).} The non-linear model $f_{NL}(x)$, which learns feature interactions via grouped embedding lookups, is equivalent to a linear regressor $W_{global}$ in a high-dimensional sparse feature space $\Phi(x)$.
\end{lemma}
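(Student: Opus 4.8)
The plan is to show that every trainable parameter of $f_{NL}$ enters the prediction linearly once the deterministic pipeline (bucketing, shuffling, grouping, and base-$k$ indexing) has been absorbed into a fixed, parameter-free feature map. The central observation is that an embedding lookup is itself a linear operation: selecting row $idx_j$ of the matrix $W_j \in \mathbb{R}^{V \times C}$ is identical to the product $\phi_j(x)\,W_j$, where $\phi_j(x) \in \{0,1\}^{V}$ is the one-hot row vector with a single $1$ in coordinate $idx_j$. Since $idx_j$ is computed deterministically from $x$ through stages 1--3 (with the hyperparameters $B_l$, $P$, $k$, and $G$ held fixed), the map $x \mapsto \phi_j(x)$ is a fixed nonlinear function carrying no learnable weights; intuitively, $\phi_j(x)$ indicates which of the $V = k^{G}$ possible bucket combinations the $j$-th expert observes, which is exactly where that expert's nonlinear interaction is captured.

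First I would rewrite each per-expert term $W_j[idx_j,:]$ as $\phi_j(x)\,W_j$ and substitute it into the summation $\hat{y} = \sum_{j=1}^{E} W_j[idx_j,:]$. Next I would define the concatenated sparse feature map $\Phi(x) = [\,\phi_1(x),\dots,\phi_E(x)\,] \in \{0,1\}^{E V}$, a row vector with exactly $E$ nonzero entries (one per block), together with the block-stacked parameter matrix $W_{global} = [\,W_1^\top,\dots,W_E^\top\,]^\top \in \mathbb{R}^{(E V)\times C}$. A block-wise expansion of the matrix product then gives $\Phi(x)\,W_{global} = \sum_{j=1}^{E}\phi_j(x)\,W_j = \sum_{j=1}^{E} W_j[idx_j,:] = f_{NL}(x)$, which is the claimed identity $f_{NL}(x) = \Phi(x)\,W_{global}$.

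Having established this pointwise equality, I would lift it to the dataset level: stacking the feature maps $\Phi(x_n)$ as the rows of a design matrix $\Phi_k \in \{0,1\}^{N_k \times E V}$ on client $k$ recasts the local objective as $\min_{W_{global}} \|Y_k - \Phi_k W_{global}\|_F^2$, which is exactly the least-squares form that AFL solves analytically. Because $\Phi$ is the same fixed function on every client, the lifted problem reproduces AFL's covariance structure verbatim in the $EV$-dimensional space, so the Absolute Aggregation law and the Regularization Intermediary apply unchanged; this is what I would invoke to conclude single-round, distribution-invariant aggregation.

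I do not anticipate a deep obstacle, since the argument is essentially a linearization of embedding lookups via one-hot encoding. The point requiring the most care is bookkeeping: confirming that the nonlinearity is entirely confined to the parameter-free map $\Phi$, that the block partition of $W_{global}$ aligns with the block partition of $\Phi$, and that the per-expert one-hot vectors occupy disjoint coordinate blocks so that the concatenation reproduces the sum with no cross-terms. The only practical caveat to flag is that the lifted dimension $EV = E\,k^{G}$ can be very large; however, because each $\Phi(x)$ is exactly $E$-sparse, the equivalence is exact in principle while the covariance $\Phi_k^\top \Phi_k$ remains tractable to form and solve in practice.
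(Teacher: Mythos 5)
Your proposal is correct and follows essentially the same route as the paper's proof: rewriting each lookup $W_j[idx_j,:]$ as a one-hot product $\phi_j(x)\,W_j$, concatenating the $\phi_j$ into the $E$-sparse map $\Phi(x)$, block-stacking the $W_j$ into $W_{global}$, and lifting the pointwise identity $f_{NL}(x)=\Phi(x)\,W_{global}$ to the batch-level least-squares objective that AFL solves. Your added emphasis that the map $x \mapsto \Phi(x)$ is parameter-free (so all nonlinearity is absorbed into a fixed feature map) is a helpful explicit remark, but it matches what the paper's argument implicitly relies on rather than constituting a different approach.
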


\begin{proof}
The non-linearity stems from the grouped indexing (Eq. 2), where a group of $G$ features $\{g_{j,1}, \dots, g_{j,G}\}$ is combined to form a single index $idx_j$. The model's output (Eq. 3) is the sum of $E$ such lookups.

To solve for the weights $W_j$ analytically, we reformulate this lookup. An embedding lookup $W_j[idx_j, :]$ is a linear row-selection operation, which can be expressed using a one-hot vector $\phi_j(x) \in \{0, 1\}^V$ where the only non-zero entry is at position $idx_j$:
\begin{equation}
 W_j[idx_j, :] = \phi_j(x)^T W_j 
\end{equation}
The total output $\hat{y}$ is the sum of all $E$ lookups:
\begin{equation}
 \hat{y} = \sum_{j=1}^{E} \phi_j(x)^T W_j 
\end{equation}
We now define two global structures. First, a single, high-dimensional sparse feature vector $\Phi(x)$ by horizontally concatenating all $E$ one-hot vectors:
\begin{equation}
 \Phi(x) = [\phi_1(x)^T | \phi_2(x)^T | \dots | \phi_E(x)^T]^T \in \mathbb{R}^{D_e} 
\end{equation}
where the total embedding dimension is $D_e = E \times V$.
Second, a global weight matrix $W_{global}$ by vertically stacking all $E$ embedding matrices:
\begin{equation}
 W_{global} = \begin{pmatrix} W_1 \\ W_2 \\ \vdots \\ W_E \end{pmatrix} \in \mathbb{R}^{D_e \times C} 
\end{equation}
With these structures, the output of our non-linear model $f_{NL}(x)$ is perfectly represented as a standard linear model:
\begin{equation}
 \hat{y} = \Phi(x)^T W_{global} 
\end{equation}
For an entire batch of data $X_k$ on client $k$, we can construct the full linearized feature matrix $\Phi_k$ (of size $N_k \times D_e$). The global objective over $K$ clients is to find $W_{global}$ by minimizing the least-squares loss:
\begin{equation}
 \mathcal{L}(W_{global}) = \sum_{k=1}^K ||Y_k - \Phi_k W_{global}||_F^2 = ||Y - \Phi W_{global}||_F^2 
\end{equation}
This objective is mathematically identical to the one solved by AFL, simply replacing their feature matrix $X_k$ with our high-dimensional sparse feature matrix $\Phi_k$.
\end{proof}

This equivalence allows us to apply AFL's federated aggregation framework. The optimal global model $W_{global}$ is the solution to the normal equations:
\begin{equation}
 W_{global} = (\Phi^T \Phi)^\dagger (\Phi^T Y) 
\end{equation}
where the global components are simple sums of the local components:
\begin{equation}
 \Phi^T \Phi = \sum_{k=1}^K \Phi_k^T \Phi_k \quad \text{and} \quad \Phi^T Y = \sum_{k=1}^K \Phi_k^T Y_k 
\end{equation}
This structure is identical to AFL and can be solved in a single round using the Regularization Intermediary (RI) process.

\begin{theorem}
\textbf{(Federated Analytic Solution).} By extending the RI-AA Law from \cite{He2025AFL}, the global model $W_{global}$ can be solved analytically in a single communication round.
\end{theorem}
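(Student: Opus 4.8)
The plan is to obtain the theorem as a direct consequence of the Linear Equivalence Lemma, which reduces federated training of $f_{NL}$ to exactly the setting in which AFL's RI-AA law was established, but with the sparse feature map $\Phi$ in place of the backbone embeddings $X$. First I would invoke the Lemma to replace the non-linear objective by the equivalent linear least-squares problem $\mathcal{L}(W_{global}) = ||Y - \Phi W_{global}||_F^2$, whose minimizer satisfies the normal equations $W_{global} = (\Phi^T\Phi)^\dagger(\Phi^T Y)$. The crucial structural observation is that both sufficient statistics decompose additively across clients, $\Phi^T\Phi = \sum_{k}\Phi_k^T\Phi_k$ and $\Phi^T Y = \sum_{k}\Phi_k^T Y_k$, precisely because the bucketing functions $B_l$ and the permutation $P$ are deterministic and shared, so every client embeds its data into the same $D_e$-dimensional coordinate system with aligned axes. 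This alignment is what makes a single synchronization of local statistics sufficient.

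Second, I would set up the single-round protocol. Each client $k$ computes its local covariance $C_k = \Phi_k^T\Phi_k$ and cross-correlation $Q_k = \Phi_k^T Y_k$, regularizes via the RI process to obtain a well-posed local head $W_k$, and transmits its covariance once. The server re-sums these to reconstruct the global Gram matrix $\Phi^T\Phi = \sum_k C_k$ and applies the Absolute Aggregation law, expressing the global solution as a weighted combination $W_{global} = \sum_k \mathcal{W}_k W_k$, where each weighting matrix $\mathcal{W}_k$ is built analytically from the local and aggregate covariances. Because the aggregation merely re-sums the exact Gram-matrix statistics, the reconstructed $W_{global}$ is identical to the centralized solution regardless of how the data are partitioned, establishing the claimed invariance to heterogeneity and to client count in one communication round.

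The step I expect to be the main obstacle is controlling the rank-deficiency introduced by the sparse one-hot feature map. Since each $\phi_j(x)$ activates a single row per expert, the Gram matrix $\Phi^T\Phi$ is block-structured and typically far more singular than AFL's dense $X^T X$: any composite index $idx_j$ that no local sample realizes contributes zero rows and columns, so $\Phi^T\Phi$ is generically not invertible. Here the Regularization Intermediary becomes essential rather than cosmetic: adding $\gamma I$ renders each $C_k$ strictly positive definite, guaranteeing that the local inverses and the AA weighting matrices $\mathcal{W}_k$ are well defined, after which I would verify that the analytic removal of $\gamma$ recovers exactly the Moore--Penrose solution $(\Phi^T\Phi)^\dagger(\Phi^T Y)$ on the observed support. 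The delicate part is confirming that the RI cancellation identity used in AFL continues to hold componentwise under this extreme sparsity, since that is the only place where \emph{extending} the law, rather than merely citing it, is genuinely required.
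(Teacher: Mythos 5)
Your proposal is correct and takes essentially the same approach as the paper: invoke the linear-equivalence lemma, have each client transmit the regularized sufficient statistics $C_k^r = \Phi_k^T\Phi_k + \gamma I$ and $M_k = \Phi_k^T Y_k$ in a single round, sum them at the server, and analytically strip the regularization to recover the exact pseudoinverse solution $W_{global} = (C_{agg}^r - K\gamma I)^\dagger M_{agg}$. The only cosmetic difference is that you phrase the server step in the original AA weighted-combination form $\sum_k \mathcal{W}_k W_k$ (equivalent, since $M_k = C_k^r W_k$), and the ``delicate'' RI-cancellation step you flag is actually immediate: $C_{agg}^r - K\gamma I = \Phi^T\Phi$ holds as exact linear algebra regardless of sparsity, so the Moore--Penrose pseudoinverse absorbs any rank-deficiency without componentwise verification.
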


\begin{proof}
Following the RI process \cite{He2025AFL}, each client $k$ computes and transmits two matrices based on its local high-dimensional features $\Phi_k$:
\begin{align}
 C_k^r &= \Phi_k^T \Phi_k + \gamma I \in \mathbb{R}^{D_e \times D_e} \\
 M_k &= \Phi_k^T Y_k \in \mathbb{R}^{D_e \times C} 
\end{align}
The server aggregates these components in a single step:
\begin{equation}
 C_{agg}^r = \sum_{k=1}^{K} C_k^r \quad \text{and} \quad M_{agg} = \sum_{k=1}^{K} M_k 
\end{equation}
The server first computes the aggregated regularized solution:
\begin{equation}
 W_{global}^r = (C_{agg}^r)^{-1} M_{agg} 
\end{equation}
Finally, the server analytically removes the regularization to recover the exact, unregularized global solution $W_{global}$ using the recovery formula from AFL:
\begin{equation}
 W_{global} = (C_{agg}^r - K\gamma I)^\dagger M_{agg} 
\end{equation}
Thus, we have derived a closed-form, single-round analytic solution for our non-linear model. The computation is tractable as $D_e = E \times V$ is a controllable hyperparameter.
\end{proof}

\section{Experiments}
\label{sec:experiments}

\subsection{Experimental Setup}

\textbf{Datasets \& Settings.}
We conduct our experiments on three prominent federated learning benchmark datasets: CIFAR-10 \cite{Krizhevsky2009Learning}, CIFAR-100 \cite{Krizhevsky2009Learning}, and Tiny-ImageNet \cite{Le2015Tiny}. To simulate statistical heterogeneity, we follow the standard setup used in AFL \cite{He2025AFL} and DeepAFL \cite{deepafl2025} and employ two common non-IID partitioning strategies \cite{Lin2020Ensemble}: Non-IID-1 (LDA), where data is partitioned using a Latent Dirichlet Allocation (LDA) process controlled by the parameter $\alpha$, and Non-IID-2 (Sharding), where data is sorted by label and divided into shards, with each client receiving $s$ shards. In both settings, smaller values of $\alpha$ or $s$ indicate a higher degree of data heterogeneity. For CIFAR-10, we test with $\alpha \in \{0.1, 0.05\}$ and $s \in \{2, 4\}$. For the more complex CIFAR-100 and Tiny-ImageNet datasets, we use $\alpha \in \{0.1, 0.01\}$ and $s \in \{5, 10\}$. Unless otherwise specified, all experiments use a total of $K=100$ clients.

\textbf{Baselines \& Metrics.}
We compare our proposed SAFLe against two categories of federated learning methods. The first category, Iterative Baselines, includes five widely-used gradient-based methods: FedAvg \cite{McMahan2017FedAvg}, FedProx \cite{Li2020FedProx}, MOON \cite{Li2021MOON}, FedDyn \cite{Acar2021FedDyn}, and FedNTD \cite{Lee2022FedNTD}. The second, Analytic Baselines, includes the two most relevant analytic methods: AFL \cite{He2025AFL}, the single-round linear analytic model, and DeepAFL \cite{deepafl2025}, the state-of-the-art multi-round analytic model. For a fair and direct comparison, all methods (both iterative and analytic) are initialized with the same ResNet-18 backbone pre-trained on ImageNet. The results for all baseline methods are directly obtained from the established benchmark provided in \cite{He2025AFL, deepafl2025}. We evaluate all approaches on Top-1 Accuracy (\%) to measure model performance, Communication Rounds to measure synchronization overhead, and Communication Cost (MB) to measure total data transmission per client.

\subsection{Experimental Results}

\textbf{Accuracy Comparisons}
The main accuracy results, presented in Table \ref{tab:reconstructed_table_condensed}, show that SAFLe consistently and significantly outperforms all iterative and analytic baselines across all datasets and non-IID settings. On CIFAR-10, SAFLe achieves 90.73\% accuracy, surpassing the next-best analytic method, DeepAFL \cite{deepafl2025} (86.43\%), by over 4.3\% and the original AFL \cite{He2025AFL} (80.75\%) by a 10\% margin. On CIFAR-100, SAFLe (70.61\%) again leads DeepAFL (66.98\%) and AFL (58.56\%), and on Tiny-ImageNet, our method (64.58\%) continues to outperform DeepAFL (62.35\%) and AFL (54.67\%). This demonstrates the superior representational power of our proposed non-linear analytic architecture, which achieves higher accuracy than even the multi-round DeepAFL while maintaining single-round efficiency.

\begin{table*}[t]
\footnotesize
\centering
\caption{The top-1 accuracy (\%) of compared methods under two non-IID settings. Settings controlled by $\alpha$ and $s$ are NIID-1 and NIID-2, respectively. The data is reported as average and standard deviation after multiple runs. Results in bold are the best within the compared methods in the same setting.}
\label{tab:reconstructed_table_condensed}
\begin{tabular}{llccccccccc}
\toprule
\textbf{Dataset} & \textbf{Setting} & \textbf{FedAvg} & \textbf{FedProx} & \textbf{MOON} & \textbf{FedDyn} & \textbf{FedNTD} & \textbf{AFL} & \textbf{DeepAFL} & \textbf{SAFLe} \\
\midrule
\multirow{4}{*}{CIFAR-10}
& $\alpha=0.1$ & $64.02_{\pm0.18}$ & $64.07_{\pm0.08}$ & $63.84_{\pm0.03}$ & $64.77_{\pm0.11}$ & $64.64_{\pm0.02}$ & \cellcolor{white!10}$80.75_{\pm0.00}$ & \cellcolor{white!10}$86.43_{\pm0.07}$ & \cellcolor{white!10}$\mathbf{90.73}_{\pm\mathbf{0.07}}$ \\
& $\alpha=0.05$ & $60.52_{\pm0.39}$ & $60.39_{\pm0.09}$ & $60.28_{\pm0.17}$ & $60.35_{\pm0.54}$ & $61.16_{\pm0.33}$ & \cellcolor{white!10}$80.75_{\pm0.00}$ & \cellcolor{white!10}$86.43_{\pm0.07}$ & \cellcolor{white!10}$\mathbf{90.73}_{\pm\mathbf{0.07}}$ \\
& $s=4$ & $68.47_{\pm0.13}$ & $68.46_{\pm0.08}$ & $68.47_{\pm0.15}$ & $73.50_{\pm0.11}$ & $70.24_{\pm0.11}$ & \cellcolor{white!10}$80.75_{\pm0.00}$ & \cellcolor{white!10}$86.43_{\pm0.07}$ & \cellcolor{white!10}$\mathbf{90.73}_{\pm\mathbf{0.07}}$ \\
& $s=2$ & $57.81_{\pm0.03}$ & $57.61_{\pm0.12}$ & $57.72_{\pm0.15}$ & $64.07_{\pm0.09}$ & $58.77_{\pm0.18}$ & \cellcolor{white!10}$80.75_{\pm0.00}$ & \cellcolor{white!10}$86.43_{\pm0.07}$ & \cellcolor{white!10}$\mathbf{90.73}_{\pm\mathbf{0.07}}$ \\
\midrule
\multirow{4}{*}{CIFAR-100} 
& $\alpha=0.1$ & $56.62_{\pm0.12}$ & $56.45_{\pm0.22}$ & $56.58_{\pm0.02}$ & $57.55_{\pm0.08}$ & $56.60_{\pm0.14}$ & \cellcolor{white!10}$58.56_{\pm0.00}$ & \cellcolor{white!10}$66.98_{\pm0.04}$ & \cellcolor{white!10}$\mathbf{70.61}_{\pm\mathbf{0.14}}$ \\
& $\alpha=0.01$ & $32.99_{\pm0.20}$ & $33.37_{\pm0.09}$ & $33.34_{\pm0.11}$ & $36.12_{\pm0.08}$ & $32.59_{\pm0.21}$ & \cellcolor{white!10}$58.56_{\pm0.00}$ & \cellcolor{white!10}$66.98_{\pm0.04}$ & \cellcolor{white!10}$\mathbf{70.61}_{\pm\mathbf{0.14}}$ \\
& $s=10$ & $55.76_{\pm0.13}$ & $55.80_{\pm0.16}$ & $55.70_{\pm0.25}$ & $61.09_{\pm0.09}$ & $54.69_{\pm0.15}$ & \cellcolor{white!10}$58.56_{\pm0.00}$ & \cellcolor{white!10}$66.98_{\pm0.04}$ & \cellcolor{white!10}$\mathbf{70.61}_{\pm\mathbf{0.14}}$ \\
& $s=5$ & $48.33_{\pm0.15}$ & $48.29_{\pm0.14}$ & $48.34_{\pm0.19}$ & $59.34_{\pm0.11}$ & $47.00_{\pm0.19}$ & \cellcolor{white!10}$58.56_{\pm0.00}$ & \cellcolor{white!10}$66.98_{\pm0.04}$ & \cellcolor{white!10}$\mathbf{70.61}_{\pm\mathbf{0.14}}$ \\
\midrule
\multirow{4}{*}{\shortstack{Tiny \\ ImageNet}} 
& $\alpha=0.1$ & $46.04_{\pm0.27}$ & $46.47_{\pm0.23}$ & $46.21_{\pm0.14}$ & $47.72_{\pm0.22}$ & $46.17_{\pm0.16}$ & \cellcolor{white!10}$54.67_{\pm0.00}$ & \cellcolor{white!10}$62.35_{\pm0.01}$ & \cellcolor{white!10}$\mathbf{64.58}_{\pm\mathbf{0.23}}$ \\
& $\alpha=0.01$ & $32.63_{\pm0.19}$ & $32.26_{\pm0.14}$ & $32.38_{\pm0.20}$ & $35.19_{\pm0.06}$ & $31.86_{\pm0.44}$ & \cellcolor{white!10}$54.67_{\pm0.00}$ & \cellcolor{white!10}$62.35_{\pm0.01}$ & \cellcolor{white!10}$\mathbf{64.58}_{\pm\mathbf{0.23}}$ \\
& $s=10$ & $39.06_{\pm0.26}$ & $38.97_{\pm0.23}$ & $38.79_{\pm0.14}$ & $41.36_{\pm0.06}$ & $37.55_{\pm0.09}$ & \cellcolor{white!10}$54.67_{\pm0.00}$ & \cellcolor{white!10}$62.35_{\pm0.01}$ & \cellcolor{white!10}$\mathbf{64.58}_{\pm\mathbf{0.23}}$ \\
& $s=5$ & $29.66_{\pm0.19}$ & $29.17_{\pm0.16}$ & $29.24_{\pm0.30}$ & $35.18_{\pm0.18}$ & $29.01_{\pm0.14}$ & \cellcolor{white!10}$54.67_{\pm0.00}$ & \cellcolor{white!10}$62.35_{\pm0.01}$ & \cellcolor{white!10}$\mathbf{64.58}_{\pm\mathbf{0.23}}$ \\
\bottomrule
\end{tabular}
\end{table*}

\textbf{Invariance Analysis}
Table \ref{tab:reconstructed_table_condensed} empirically shows the core theoretical benefit of our analytic formulation: invariance to data heterogeneity. While all iterative gradient-based methods suffer severe performance degradation as the data becomes more non-IID (e.g., $\alpha=0.01$ or $s=5$), our method does not. For instance, on CIFAR-100, FedAvg's accuracy plummets from 56.62\% ($\alpha=0.1$) to 32.99\% ($\alpha=0.01$), a drop of over 23.6\%. In stark contrast, all analytic methods are completely unaffected by the data distribution. Our proposed SAFLe achieves an identical 70.61\% accuracy across all four non-IID settings on CIFAR-100, just as AFL (58.56\%) and DeepAFL (66.98\%) maintain their respective scores. This invariance also extends to the number of clients, as shown in Figure \ref{fig:client_number_comparison}. While FedAvg's performance clearly declines as $K$ increases from 100 to 1000, SAFLe's accuracy remains constant, matching the invariance of AFL. This experimentally showcases SAFLe’s successful extension of the mathematical invariance of AFL to a more expressive non-linear model.

\textbf{Communication Rounds Efficiency}
DeepAFL \cite{deepafl2025} scales AFL by adding \textit{depth} (i.e., more layers), but at the direct trade-off of more communication rounds. As seen in Figures \ref{fig:rounds} and \ref{fig:rounds_all}, DeepAFL must introduce two new communication rounds for every additional layer, requiring 41 rounds for a $T=20$ model. Our method, SAFLe, scales by adding \textit{width} (i.e., more experts $E$), allowing us to scale the model's expressivity while remaining a \textbf{single-shot} solution. SAFLe's expressivity is decoupled from its number of communication rounds, which is always one round.

    

\begin{figure}[t]
\centering
    \begin{subfigure}[b]{0.48\columnwidth}
        \includegraphics[width=\linewidth]{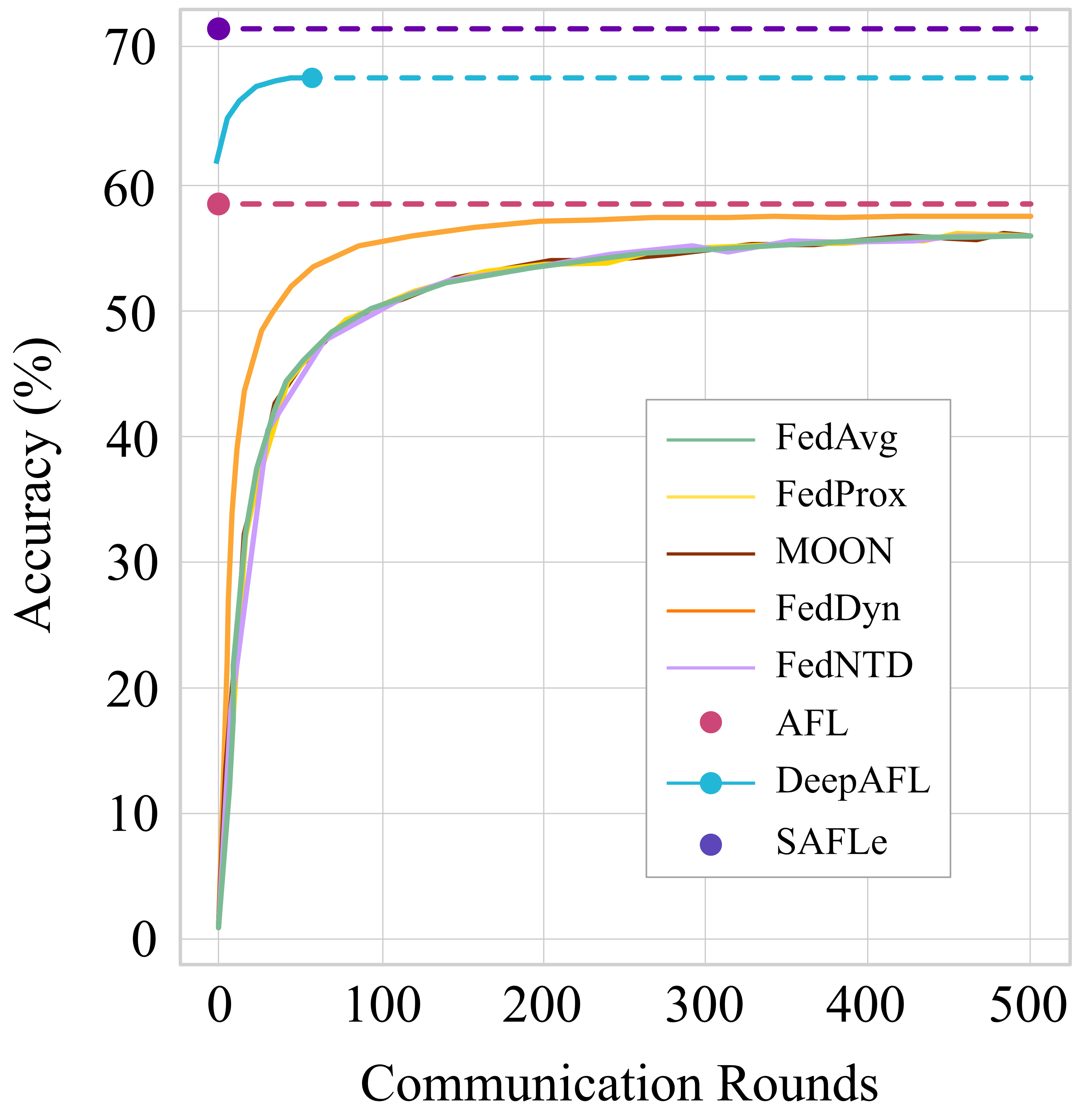}
        \caption{CIFAR-100}
        \label{fig:rounds_c100}
    \end{subfigure}
    \hfill 
    \begin{subfigure}[b]{0.48\columnwidth}
        \includegraphics[width=\linewidth]{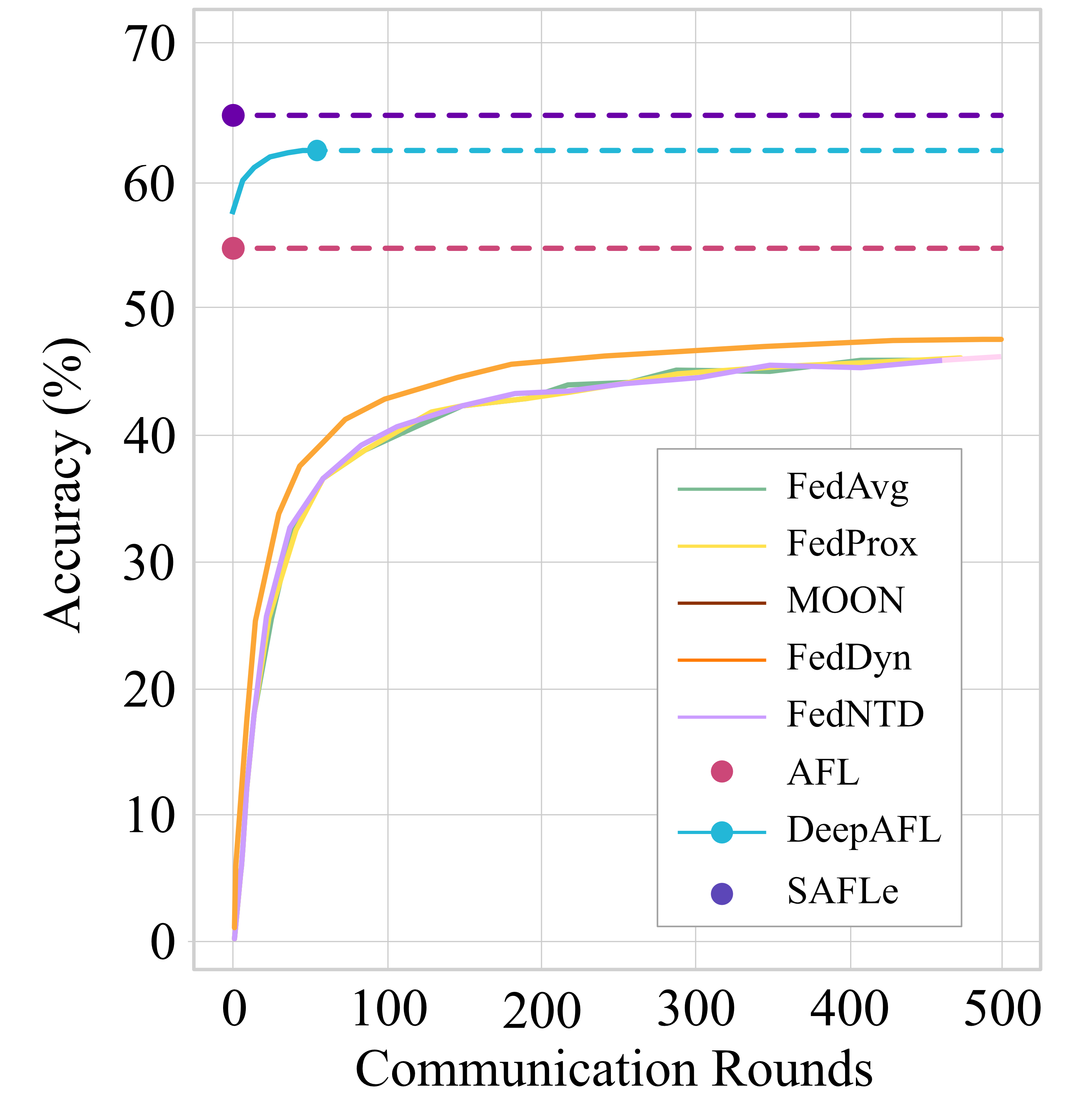}
        \caption{Tiny-ImageNet}
        \label{fig:rounds_tiny}
    \end{subfigure}
    
    \caption{Accuracy vs. Communication Rounds on (a) CIFAR-100 and (b) Tiny-ImageNet. Analytic methods SAFLe and AFL achieve final accuracy in a single round.}
    \label{fig:rounds_all}
\end{figure}

\textbf{Communication Cost Efficiency}
While SAFLe operates in a single round, this design choice increases the size of the payload for that single transmission. However, as shown in Figure \ref{fig:pareto_frontier_acc_comm}, when comparing the total communication cost (MB) required to achieve a target accuracy, SAFLe is significantly more efficient than DeepAFL. For example, to reach $\approx 67\%$ accuracy on CIFAR-100, DeepAFL requires 146MB of total communication. Our method achieves this same accuracy using only 70MB, a reduction of over 50\%. This efficiency gain is even more pronounced on Tiny-ImageNet: DeepAFL needs 170MB to achieve 62.3\% accuracy, while SAFLe requires only 60MB, an almost 3x reduction. This low communication cost is possible due to the sparse nature of the embeddings, as discussed in the methodology, and experimentally shown in the next paragraph.

\begin{figure}[t]
\centering
    \begin{subfigure}[b]{0.48\columnwidth}
        \includegraphics[width=\linewidth]{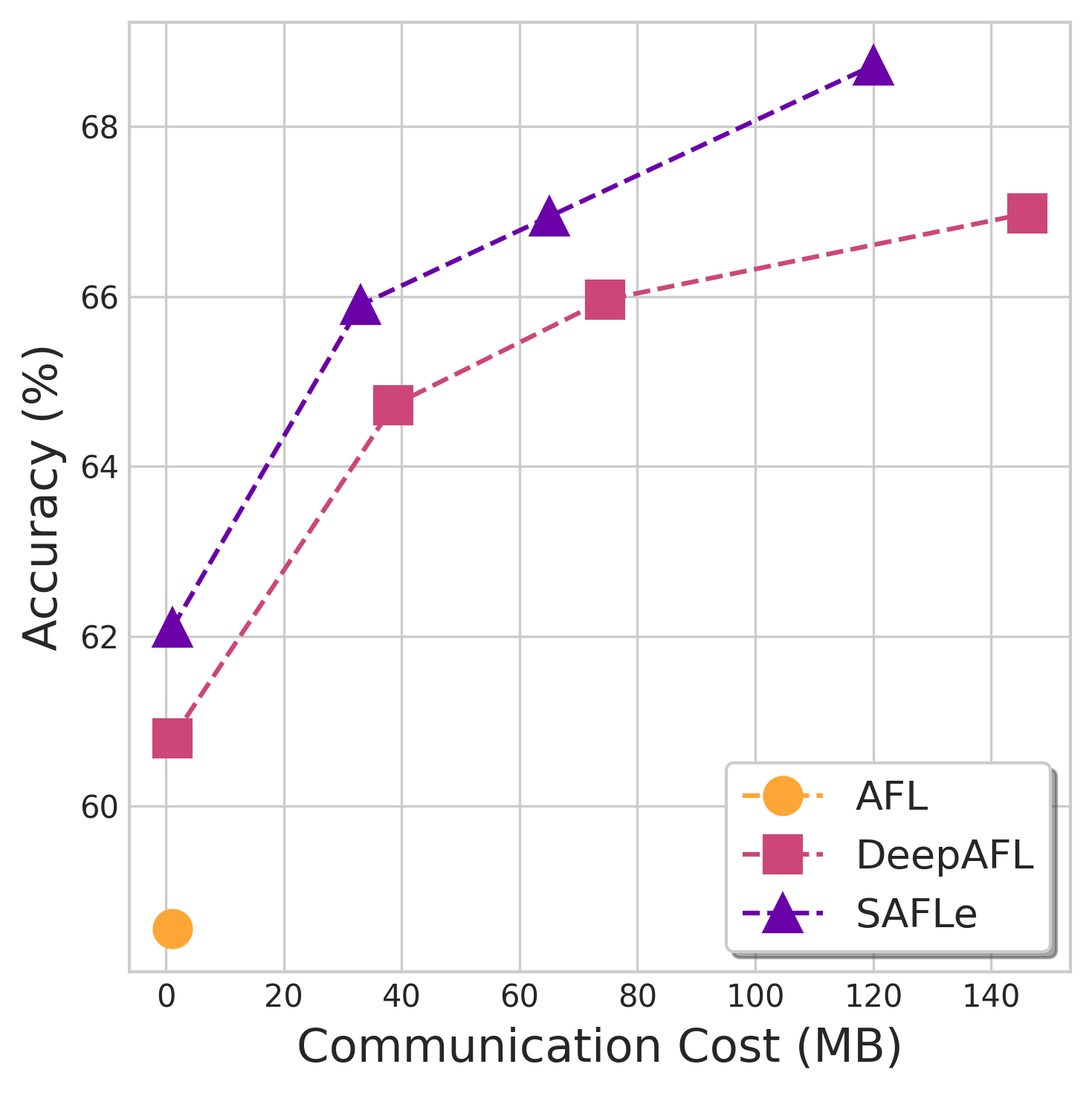}
        \caption{CIFAR-100}
        \label{fig:acc_comm_c100}
    \end{subfigure}
    \hfill 
    \begin{subfigure}[b]{0.48\columnwidth}
        \includegraphics[width=\linewidth]{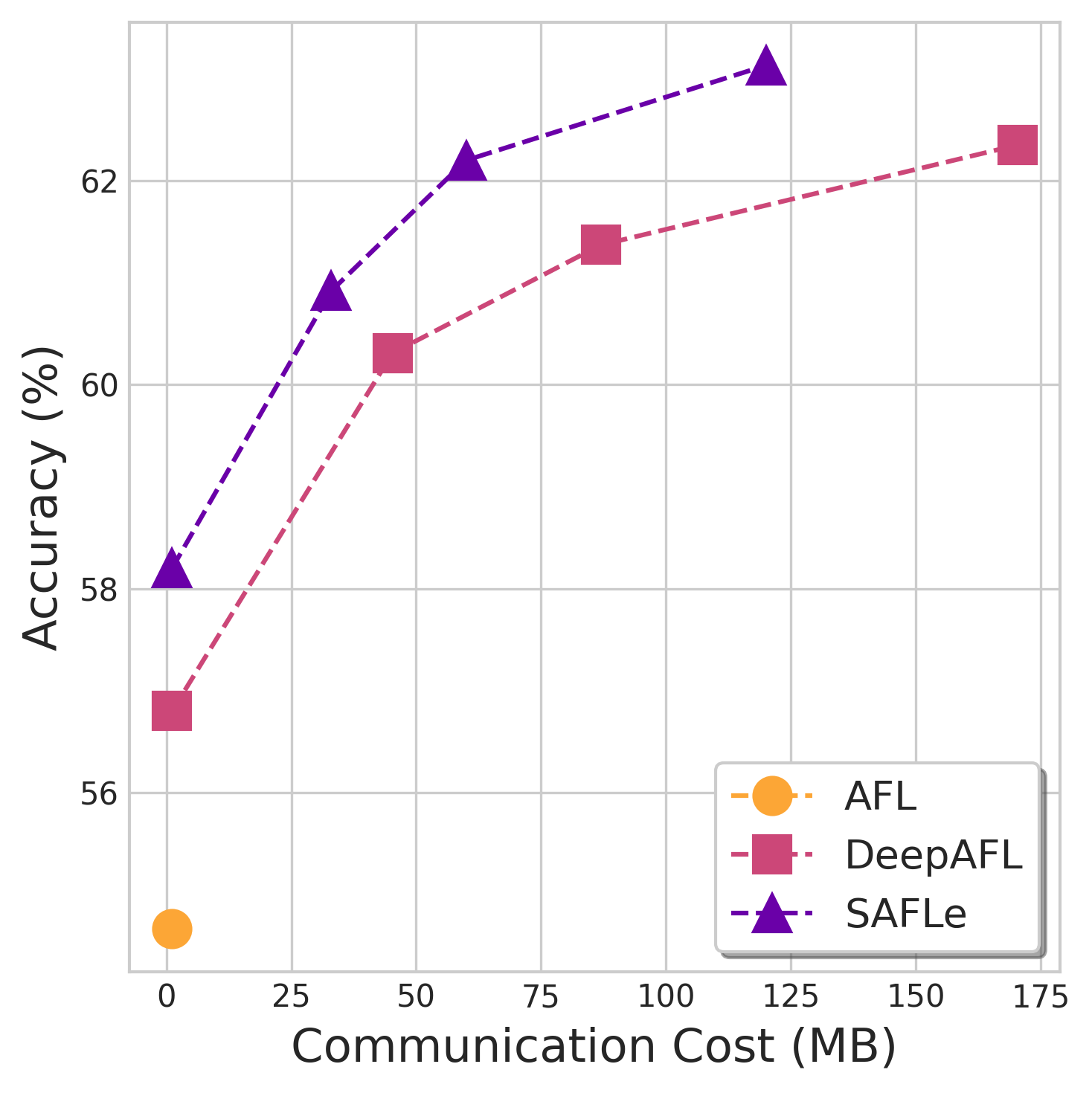}
        \caption{Tiny-ImageNet}
        \label{fig:acc_comm_tiny}
    \end{subfigure}
    \caption{Accuracy vs. Total Communication Cost (MB) per client on (a) CIFAR-100 and (b) Tiny-ImageNet. Our single-round method, SAFLe, achieves a given accuracy (e.g., $\approx 62\%$ on Tiny-ImageNet) for a fraction of the total communication cost required by the multi-round DeepAFL.}
    \label{fig:pareto_frontier_acc_comm}
\end{figure}


\begin{figure}[t]
\centering
    \begin{subfigure}[b]{0.48\columnwidth}
        \includegraphics[width=\linewidth]{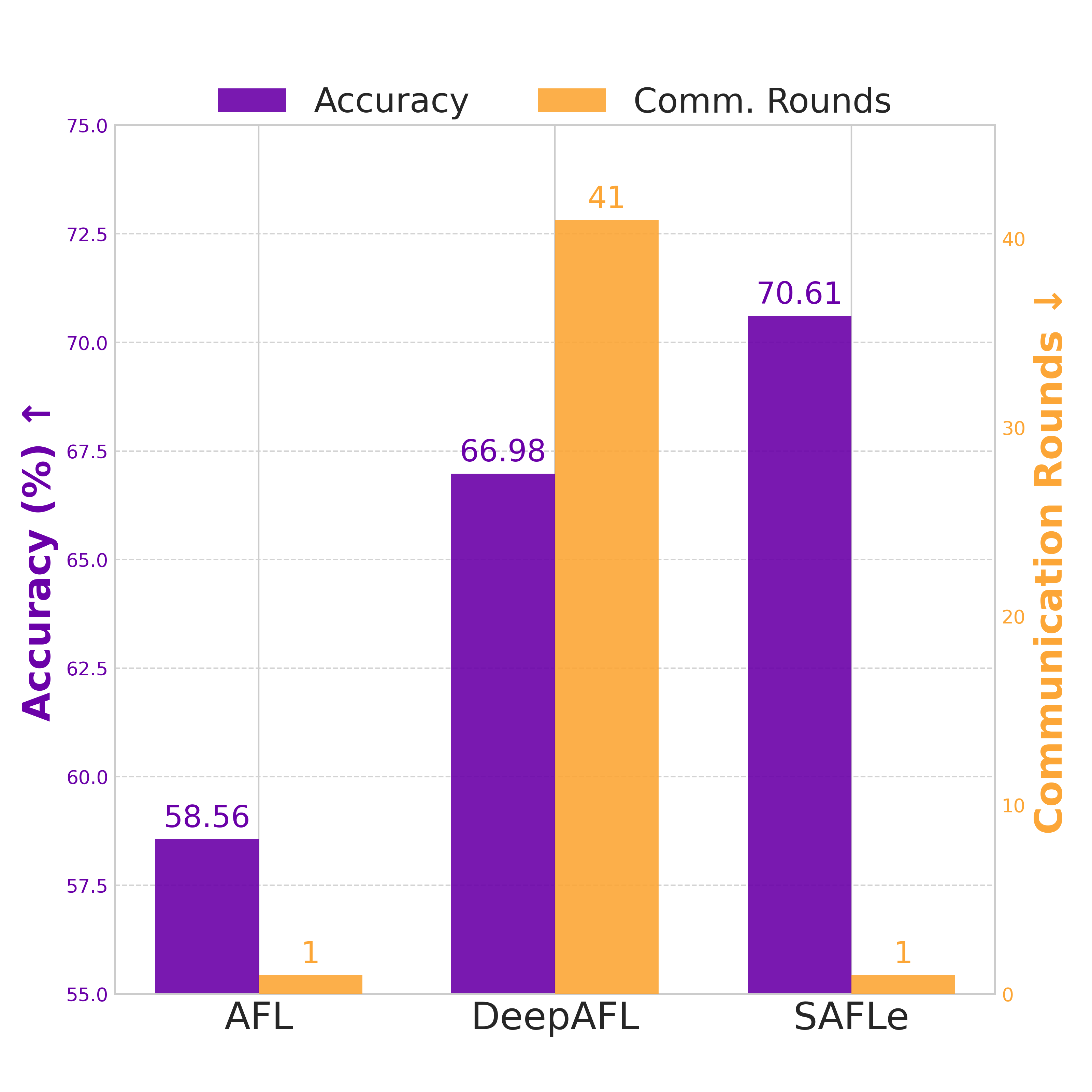}
        \caption{CIFAR-100}
        \label{fig:cifar100_clients}
    \end{subfigure}
    \hfill 
    \begin{subfigure}[b]{0.48\columnwidth}
        \includegraphics[width=\linewidth]{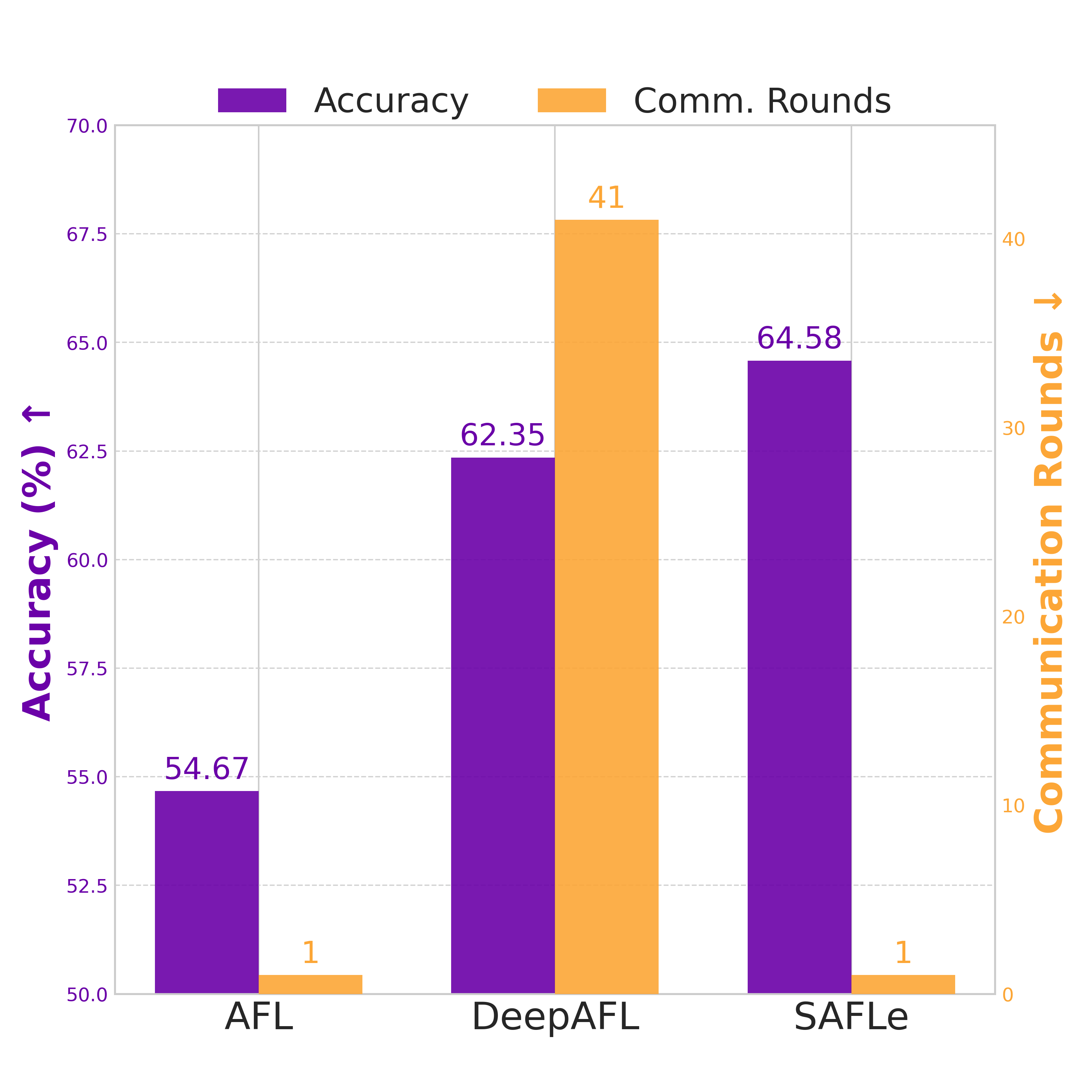}
        \caption{Tiny-ImageNet}
        \label{fig:tinyimagenet_clients}
    \end{subfigure}
    
    \caption{Comparison of accuracy and communication rounds for analytic federated methods. SAFLe (ours) achieves the highest accuracy with the lowest rounds.}

    \label{fig:rounds}
\end{figure}

\begin{figure}[t]
\centering
    \begin{subfigure}[b]{0.48\columnwidth}
        \includegraphics[width=\linewidth]{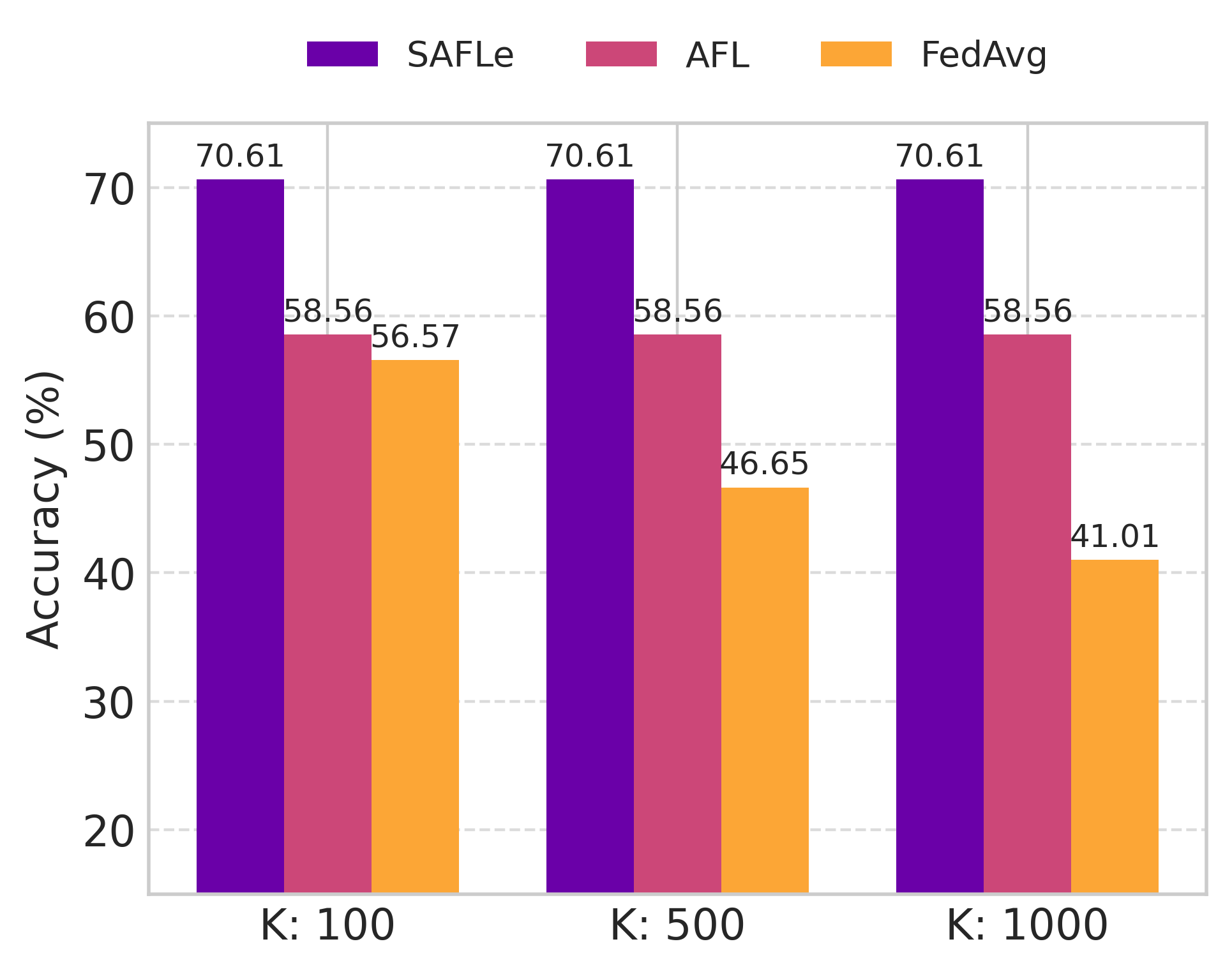}
        \caption{CIFAR-100}
        \label{fig:cifar100_clients}
    \end{subfigure}
    \hfill 
    \begin{subfigure}[b]{0.48\columnwidth}
        \includegraphics[width=\linewidth]{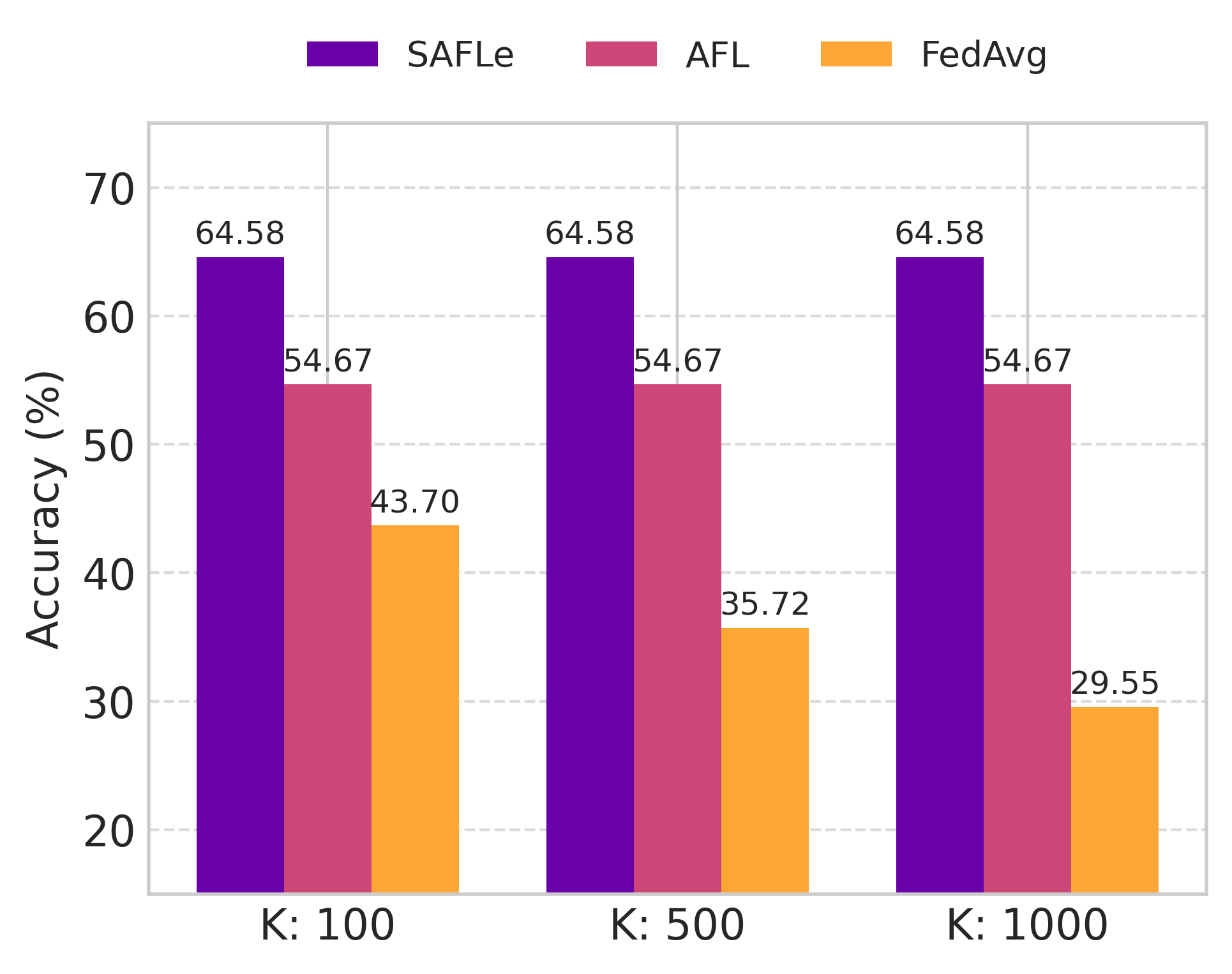}
        \caption{Tiny-ImageNet}
        \label{fig:tinyimagenet_clients}
    \end{subfigure}
    
    \caption{Accuracy over increasing numbers of clients $(K)$. The proposed SAFLe is mathematically invariant to the number of clients, similar to AFL, whereas FedAvg's performance declines as $K$ increases.}
    \label{fig:client_number_comparison}
\end{figure}


\textbf{Ablation: Embedding Sparsity}
Our communication cost efficiency is enabled by a trade-off between accuracy and the sparsity of the communicated correlation matrix. Figure \ref{fig:emb_configs} demonstrates this through an ablation where, for a fixed model size, we vary the embedding layer configuration, that is, the number of embeddings $E$ and vocabulary size $V$. Configurations with a large group of small embeddings (high $E$, low $V$) achieve the best accuracy, which drops as we shift to fewer, larger embeddings (low $E$, high $V$). In contrast, the sparsity of the correlation matrix is inversely proportional. A layer with many small embeddings tends to be more dense, and the matrix quickly becomes very sparse as the embedding vocabulary size $V$ grows. The ideal trade-off is the point that balances high accuracy with high sparsity (low communication cost), which we found to be around a vocabulary size of $V \in [32, 64]$.

\begin{figure}[t]
\centering

\begin{subfigure}[b]{1.0\columnwidth}
    \centering
    \includegraphics[width=\linewidth]{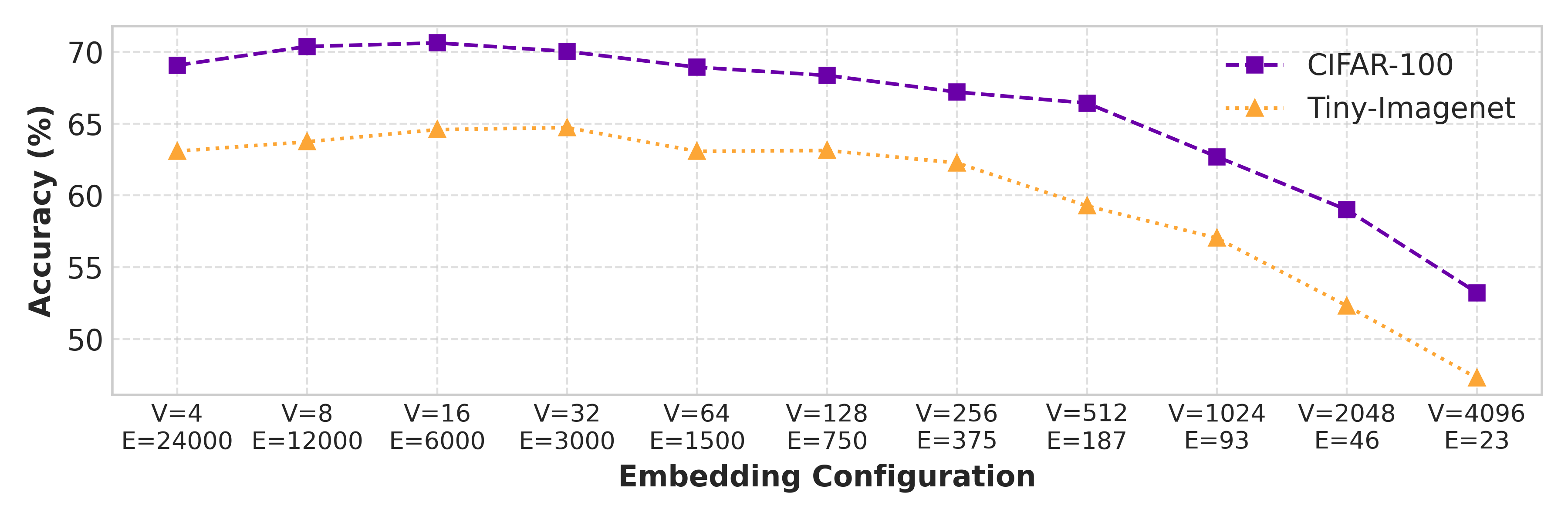}
    \caption{Embedding Configuration vs Accuracy}
    \label{fig:config_acc}
\end{subfigure}

\vspace{0.5em}

\begin{subfigure}[b]{1.0\columnwidth}
    \centering
    \includegraphics[width=\linewidth]{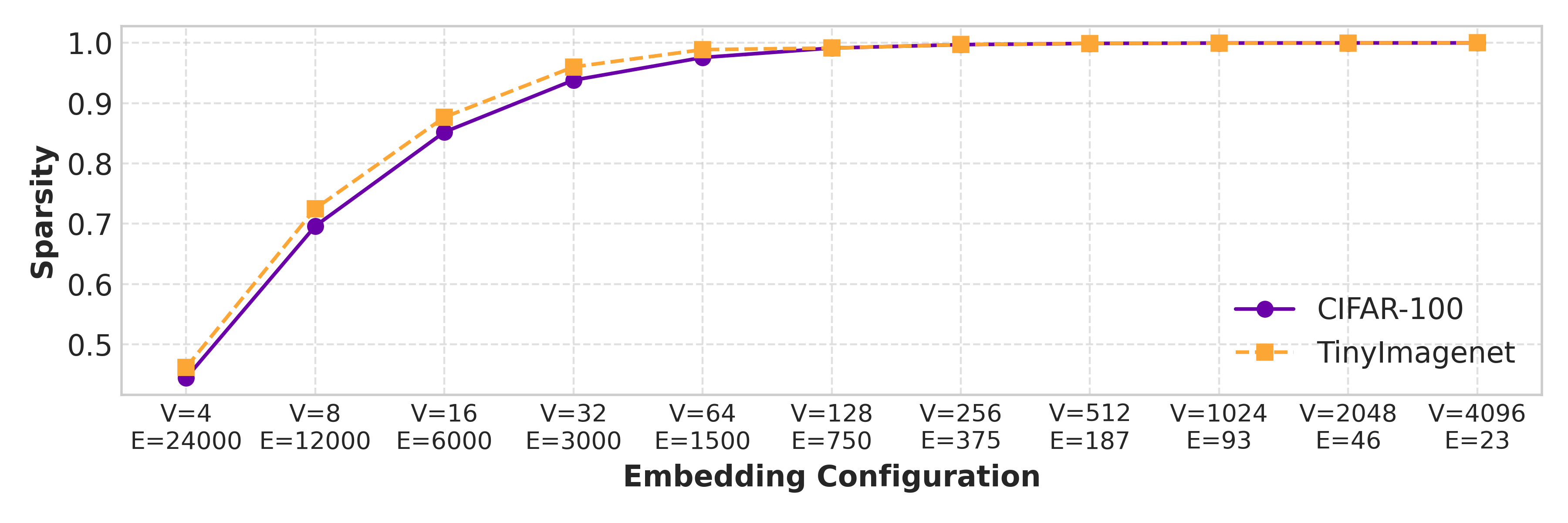}
    \caption{Embedding Configuration vs Sparsity}
    \label{fig:config_sparsity}
\end{subfigure}

\caption{Ablation study of embedding layer configurations (x-axis in $\log_2$ scale). We vary the number of embeddings ($E$) and their vocabulary size ($V$) while keeping total model size fixed. (a) Many small embeddings (high $E$, low $V$) generalize best, whereas fewer large embeddings (low $E$, high $V$) overfit and yield lower accuracy. (b) Many small embeddings produce dense correlation matrices with high communication cost, while fewer large embeddings result in sparser, more communication-efficient matrices.}
\label{fig:emb_configs}
\end{figure}

\textbf{Ablation: Bucketing Strategy}
The bucketing function used to discretize continuous backbone features is an important design choice. In Table \ref{tab:bucketing_strategy}, we compare three variants: Standard (Integer), One-Hot, and Binary Overlapping. Binary Overlapping performs best, while standard Integer quantization performs worst by a large margin. We believe this is because Integer quantization introduces a sharp “cliff effect,” where small changes across bin boundaries produce completely different indices and therefore very different embedding lookups, hurting generalization. One-Hot is slightly better but still assigns nearby inputs to disjoint representations. In contrast, Binary Overlapping preserves local similarity by giving nearby values similar bit patterns, allowing the model to learn more consistent embeddings and generalize better.

\begin{table}[t]
\footnotesize
\centering
\caption{Ablation study on bucketing strategies. Accuracy is reported across different numbers of buckets ($B_n$). Top accuracy in each setting is shown in green, and for each dataset in bold. 
}
\label{tab:bucketing_strategy}
\begin{tabular}{lcccc}
\toprule
\textbf{Dataset} & $\mathbf{B_n}$ & \textbf{Integer} & \textbf{OneHot} & \textbf{Binary Overlap} \\
\midrule
\multirow{4}{*}{CIFAR-10}
& 2 & $87.16_{\pm0.00}$ & $87.54_{\pm0.27}$ & $\color{teal}88.43_{\pm0.12}$ \\
& 8 & $78.15_{\pm0.00}$ & $81.56_{\pm0.21}$ & $\color{teal}\mathbf{90.73_{\pm0.08}}$ \\
& 14 & $67.53_{\pm0.00}$ & $74.43_{\pm0.17}$ & $\color{teal}90.03_{\pm0.07}$ \\
& 20 & $59.75_{\pm0.00}$ & $68.27_{\pm0.15}$ & $\color{teal}89.12_{\pm0.07}$ \\
\midrule
\multirow{4}{*}{CIFAR-100} 
& 2 & $67.72_{\pm0.00}$ & $68.22_{\pm0.04}$ & $\color{teal}68.82_{\pm0.16}$ \\
& 8 & $58.28_{\pm0.00}$ & $61.66_{\pm0.04}$ & $\color{teal}\mathbf{70.61_{\pm0.12}}$ \\
& 14 & $47.75_{\pm0.00}$ & $54.65_{\pm0.04}$ & $\color{teal}70.12_{\pm0.14}$ \\
& 20 & $40.09_{\pm0.00}$ & $48.77_{\pm0.04}$ & $\color{teal}69.35_{\pm0.23}$ \\
\midrule
\multirow{4}{*}{\shortstack{Tiny \\ ImageNet}} 
& 2 & $61.38_{\pm0.00}$ & $61.78_{\pm0.01}$ & $\color{teal}62.43_{\pm0.27}$ \\
& 8 & $52.10_{\pm0.00}$ & $55.70_{\pm0.01}$ & $\color{teal}\mathbf{64.58_{\pm0.23}}$ \\
& 14 & $41.89_{\pm0.00}$ & $48.66_{\pm0.01}$ & $\color{teal}64.12_{\pm0.15}$ \\
& 20 & $34.70_{\pm0.00}$ & $43.22_{\pm0.01}$ & $\color{teal}63.93_{\pm0.21}$ \\
\bottomrule
\end{tabular}
\end{table}

\textbf{Ablation: Different Backbones.}
To show that SAFLe’s gains are not tied to a specific backbone, we follow the AFL ablation protocol \cite{He2025AFL} and compare against AFL using the same three frozen pre-trained backbones: ResNet-18 \cite{He2016ResNet}, VGG11 \cite{Simonyan2015VGG}, and ViT-B-16 \cite{Dosovitskiy2021ViT}. DeepAFL \cite{deepafl2025} is not included here because it does not report this experiment and its code is not yet public. As shown in Table \ref{tab:backbone_comparison_t}, SAFLe consistently outperforms AFL across all backbones and datasets. While stronger backbones improve both methods, SAFLe’s non-linear analytic head continues to provide a clear gain over AFL’s linear head, confirming that our advantage is robust to backbone choice.

Compared with recent one-shot federated methods built on strong pre-trained models, SAFLe remains superior. In particular, against TOFA \cite{tofa}, a recent training-free one-shot method based on a ViT-B/16 vision-language backbone, SAFLe achieves higher accuracy on both CIFAR-10 (95.31\% vs.\ 93.18\%) and CIFAR-100 (77.83\% vs.\ 76.63\%). This shows that SAFLe’s gains are not tied to a particular backbone family and remain robust even when compared against strong pre-trained one-shot baselines.

\begin{table}[h]
\centering
\caption{Ablation study of Top-1 accuracy (\%) using different backbones.}
\label{tab:backbone_comparison_t}
\resizebox{\columnwidth}{!}{
\begin{tabular}{l l c c c}
\toprule
\textbf{Dataset} & \textbf{Method} & \textbf{ResNet-18} & \textbf{VGG11} & \textbf{ViT-B-16} \\
\midrule
\multirow{2}{*}{CIFAR-10} & AFL & 80.75 & 82.72 & 93.92 \\
 & SAFLe & \textbf{90.73} & \textbf{87.50} & \textbf{95.31} \\
\midrule
\multirow{2}{*}{CIFAR-100} & AFL & 58.56 & 60.43 & 75.45 \\
 & SAFLe & \textbf{70.61} & \textbf{62.68} & \textbf{77.83} \\
\midrule
\multirow{2}{*}{Tiny-ImageNet} & AFL & 54.67 & 54.73 & 82.02 \\
 & SAFLe & \textbf{64.58} & \textbf{58.34} & \textbf{82.71} \\
\bottomrule
\end{tabular}
} 
\end{table}

\textbf{Robustness to Label Noise.}
We also observe that SAFLe is reasonably robust to noisy labels, even though it does not use any explicit label-noise correction mechanism. On ISIC 2019~\cite{tschandl2018ham10000,codella2017isic,hernandez2024bcn20000} with label noise rates $\rho=\{0.2,0.3,0.4,0.6\}$, SAFLe achieves 69.47\%, 67.82\%, 66.53\%, and 62.94\%, respectively. These results are competitive with the specialized noise-robust method FedNoRo~\cite{wu2023fednoro}, which reports 69.8\%, 68.5\%, 68.9\%, and 63.3\%, and consistently outperform FedCorr+LA~\cite{xu2022fedcorr}, which reports 63.6\%, 61.9\%, 59.6\%, and 53.9\% at the same noise levels. This suggests that SAFLe inherits a useful degree of intrinsic robustness from its analytic formulation, even without task-specific noise handling.

\section{Conclusion}

In this work, we addressed the fundamental trade-off in analytic federated learning between single-round efficiency and non-linear expressive power. We proposed SAFLe, a framework that breaks this trade-off, achieving high model expressivity while retaining the single-shot, gradient-free properties of AFL.

Our method introduces a structured non-linear head using feature bucketing and sparse, grouped embeddings. We proved that this complex architecture is mathematically equivalent to a high-dimensional linear regression. This key theoretical insight allows SAFLe to be the first non-linear framework to be solved using AFL's single-round Absolute Aggregation law, ensuring mathematical invariance to data heterogeneity.

Our experiments demonstrate that SAFLe establishes a new state-of-the-art for analytic FL, outperforming both the linear AFL and the multi-round DeepAFL in accuracy across all benchmark datasets. By delivering high accuracy, true single-shot efficiency, and robustness to non-IID data, SAFLe provides a practical, efficient, and scalable solution for real-world federated vision tasks.

\section*{Acknowledgements}
This research was supported by Semiconductor Research Corporation (SRC) Task 3148.001, National Science Foundation (NSF) Grants \#2326894, \#2425655 (supported in part by the federal agency and Intel, Micron, Samsung, and Ericsson through the FuSe2 program), NVIDIA Applied Research Accelerator Program, and compute resources on the Vista GPU cluster through CGAI \& TACC at UT Austin. This work is supported in part by the NSF grant CCF-2531882 and a UT Cockrell School of
Engineering Doctoral Fellowship. Any opinions, findings, conclusions, or recommendations are those of the
authors and not of the funding agencies. 

\small
\bibliographystyle{ieeenat_fullname}
\bibliography{main}



\end{document}